\newtheorem{theorem}{Theorem}
\newtheorem{lemma}[theorem]{Lemma}
\newtheorem{corollary}[theorem]{Corollary}
\newtheorem*{thma}{{\em Theorem~2 in~\cite{Roux:08}}}
\newtheorem*{thmb}{{\em Theorem~4 in~\cite{Roux:10}}}
\newtheorem*{thmc}{{\em Theorem~2 in~\cite{Roux:10}}}
\newcommand{\ha}{{\hat j}}
\newcommand{\captionfonts}{\normalsize}
\long\def\@makecaption#1#2{%
  \vskip\abovecaptionskip
  \sbox\@tempboxa{{\captionfonts #1: #2}}%
  \ifdim \wd\@tempboxa >\hsize
    {\captionfonts #1: #2\par}
  \else
    \hbox to\hsize{\hfil\box\@tempboxa\hfil}%
  \fi
  \vskip\belowcaptionskip}
\renewcommand{\thefootnote}{\normalsize \arabic{footnote}} 	
\begin{document}
\hspace{13.9cm}1

\ \vspace{20mm}\\

\noindent {\LARGE Refinements of Universal Approximation Results for Deep Belief Networks and Restricted Boltzmann Machines}

\ \\
{\bf \large Guido Montufar,$^{\displaystyle 1,\ast}$} {\bf \large Nihat Ay$^{\displaystyle 1, \displaystyle 2}$}\\
\\
{$^{\displaystyle 1}$Max Planck Institute for Mathematics in the Sciences,  Inselstra\ss e 22, D-04103 Leipzig, Germany.}\\
{$^{\displaystyle 2}$Santa Fe Institute, 1399 Hyde Park Road, Santa Fe, New Mexico 87501, USA.}%\\
%\ \\[-2mm]
%{\bf Keywords:} Manuscript, journal, instructions

\renewcommand{\thefootnote}{\fnsymbol{footnote}} 
\footnotetext{* montufar@mis.mpg.de}

\thispagestyle{empty}
%\markboth{}{NC instructions}
%
%\ \vspace{-0mm}\\
%
%Abstract
\begin{abstract}
We improve recently published results about resources of Restricted Boltzmann Machines (RBM) and Deep Belief Networks (DBN) required to make them Universal Approximators. 
We show that any distribution $p$ on the set $\{0,1\}^n$ of binary vectors of length $n$ can be arbitrarily well approximated by an RBM with $k-1$ hidden units, where $k$ is the minimal number of pairs of binary vectors differing in only one entry such that their union contains the support set of $p$. In important cases this number is half of the cardinality of the support set of $p$ (given in \cite{Roux:08}). 
We construct a DBN with $\frac{2^n}{2(n-b)}$, $b\sim \log n$, hidden layers of width $n$ that is capable of approximating any distribution on $\{0,1\}^n$ arbitrarily well. This 
confirms a conjecture presented in \cite{Roux:10}.  
\end{abstract}

\section{Introduction}

This work rests upon ideas presented in \cite{Roux:08} and \cite{Roux:10}. We positively resolve a conjecture that was posed in \cite{Roux:10}. 
Before going into the details of this conjecture we first recall some basic notions.

The definition of RBM's and DBN's that we use is the one given in the papers mentioned above and references therein. For details the reader is referred to those works. Here we give a short description: A Boltzmann Machine consists of a collection of binary stochastic units, where any pair of units may interact. The unit set is divided into {\em visible} and {\em hidden} units. Correspondingly the state is characterized by a pair $(v,h)$ where $v$ denotes the state of the visible and $h$ denotes the state of the hidden units.  One is usually 
interested in 
distributions on the visible states $v$ and would like to generate these as marginals of distributions on the states $(v,h)$. In a general Boltzmann Machine 
the interaction graph is allowed to be complete. 
A Restricted Boltzmann Machine (RBM) is a special type of  Boltzmann Machine, where the graph describing the interactions is bipartite: Only connections between 
visible and hidden units appear. It is not allowed that two visible units or two hidden units interact with each other (see  Fig.~\ref{graphRBMandDBN}).  
The distribution over the states of all RBM units has the form of the Boltzmann distribution $p(v,h)\propto \exp(h^T W \cdot v + B\cdot v + C\cdot h)$, where $v$ is a binary vector of length equal to the number of visible units, and $h$ a binary vector with length equal to the number of hidden units. The parameters of the RBM are given by the matrix $W$ and the two vectors $B$ and $C$. 
A Deep Belief Network consists of a chain of layers of units. Only units from neighboring layers are allowed to be connected, there are no connections within each layer. The last two layers have undirected connections between them, while the other layers have connections directed towards the first layer, the visible layer. 
The general idea of a DBN is to assume that all layers are of similar size, as shown in Fig.~\ref{graphRBMandDBN}. 

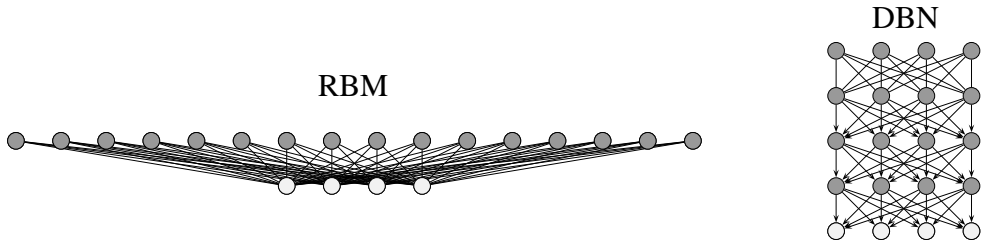
\begin{figure}

\vfill
\psset{unit=.6cm}
\psset{linewidth=.3pt}

% RBM
\begin{pspicture}(-1,-2)(15,2)
{\psset{fillstyle=solid,fillcolor=black!40}
  \multido{\n=0+1}{16}{%
  \cnodeput(\n,0){\n}{}
   }
  \multido{\n=6+1}{4}{%
  \cnodeput(\n,-1){-\n}{}
   }
}

\multido{\i=6+1}{4}{
  \multido{\n=0+1}{16}{%
  \ncline{\n}{-\i} }
 }
{\psset{fillstyle=solid,fillcolor=black!5}
  \multido{\n=6+1}{4}{%
  \cnodeput(\n,-1){-\n}{}
   }
}
{\psset{fillstyle=solid,fillcolor=black!40}
  \multido{\n=0+1}{16}{%
  \cnodeput(\n,0){\n}{}
   }
}
\put(6.7,1){RBM}
\end{pspicture}
%
% DBN 
\begin{pspicture}(-2,0)(5,5)
\put(1.8,4.5){DBN}
{\psset{fillstyle=solid,fillcolor=black!40}
 \cnodeput(1,0){01}{}
 \cnodeput(2,0){02}{}
 \cnodeput(3,0){03}{}
 \cnodeput(4,0){04}{}   
 \cnodeput(1,1){11}{}
 \cnodeput(2,1){12}{}
 \cnodeput(3,1){13}{}
 \cnodeput(4,1){14}{}   
 \cnodeput(1,2){21}{}
 \cnodeput(2,2){22}{}
 \cnodeput(3,2){23}{}
 \cnodeput(4,2){24}{}   
 \cnodeput(1,3){31}{}
 \cnodeput(2,3){32}{}
 \cnodeput(3,3){33}{}
 \cnodeput(4,3){34}{}   
 \cnodeput(1,4){41}{}
 \cnodeput(2,4){42}{}
 \cnodeput(3,4){43}{}
 \cnodeput(4,4){44}{}   
}

\ncline{<-}{01}{11}
\ncline{<-}{01}{12}
\ncline{<-}{01}{13}
\ncline{<-}{01}{14}
\ncline{<-}{02}{11}
\ncline{<-}{02}{12}
\ncline{<-}{02}{13}
\ncline{<-}{02}{14}
\ncline{<-}{03}{11}
\ncline{<-}{03}{12}
\ncline{<-}{03}{13}
\ncline{<-}{03}{14}
\ncline{<-}{04}{11}
\ncline{<-}{04}{12}
\ncline{<-}{04}{13}
\ncline{<-}{04}{14}

\ncline{<-}{11}{21}
\ncline{<-}{11}{22}
\ncline{<-}{11}{23}
\ncline{<-}{11}{24}
\ncline{<-}{12}{21}
\ncline{<-}{12}{22}
\ncline{<-}{12}{23}
\ncline{<-}{12}{24}
\ncline{<-}{13}{21}
\ncline{<-}{13}{22}
\ncline{<-}{13}{23}
\ncline{<-}{13}{24}
\ncline{<-}{14}{21}
\ncline{<-}{14}{22}
\ncline{<-}{14}{23}
\ncline{<-}{14}{24}

\ncline{->}{31}{21}
\ncline{->}{31}{22}
\ncline{->}{31}{23}
\ncline{->}{31}{24}
\ncline{->}{32}{21}
\ncline{->}{32}{22}
\ncline{->}{32}{23}
\ncline{->}{32}{24}
\ncline{->}{33}{21}
\ncline{->}{33}{22}
\ncline{->}{33}{23}
\ncline{->}{33}{24}
\ncline{->}{34}{21}
\ncline{->}{34}{22}
\ncline{->}{34}{23}
\ncline{->}{34}{24}

\ncline{31}{41}
\ncline{31}{42}
\ncline{31}{43}
\ncline{31}{44}
\ncline{32}{41}
\ncline{32}{42}
\ncline{32}{43}
\ncline{32}{44}
\ncline{33}{41}
\ncline{33}{42}
\ncline{33}{43}
\ncline{33}{44}
\ncline{34}{41}
\ncline{34}{42}
\ncline{34}{43}
\ncline{34}{44}

{\psset{fillstyle=solid,fillcolor=black!5}
  \cnodeput(1,0){01}{}
 \cnodeput(2,0){02}{}
 \cnodeput(3,0){03}{}
 \cnodeput(4,0){04}{}   
}
\end{pspicture}

\caption{In the left side we sketched the graph of interactions in an RBM, in the right side the corresponding graph for a DBN with $n=4$ visible units (drawn brighter). An arbitrary weight can be assigned to every edge. Beside this connection weights, every node contains an individual {\em offset} weight. Every node takes value $0$ or $1$ with a probability that depends on the weights. The RBM and DBN of size depicted above are examples of universal approximators of distributions on $\{0,1\}^4$ (\cite{Roux:08} and \cite{Roux:10} respectively). In the present paper is shown that the number of hidden units in the RBM can be halved, and the number of hidden layers in the DBN can be roughly halved. }\label{graphRBMandDBN}
\end{figure}

A major difficulty in the use of Boltzmann Machines always has been the slowness of learning. In order to overcome this problem, DBN's have been proposed as an alternative to classical Boltzmann Machines.  
An efficient learning algorithm for DBN's was given in the paper~\cite{Hinton:2006}.

The fundamental questions along the above-mentioned previous work are the following: Does a DBN exist that is capable of approximating any distribution on the visible states through appropriate choice of parameters? We will refer to such a DBN as a universal DBN approximator (similarly we will use the denomination universal RBM approximator). If universal DBN approximators exist, what is their minimal size? 

Since DBN's are more difficult to study than RBM's, as a preliminary step, corresponding questions related to the representational power of RBM's have been addressed. 
Theorem~2 in~\cite{Roux:08} shows that any distribution on $\{0,1\}^n$ with support of cardinality $s$ is arbitrarily well approximated (with respect to the Kullback Leibler divergence) by the marginal distribution of an RBM containing $s +1$ hidden units:

\begin{thma}%[{\em \bf Theorem~2 in~\cite{Roux:08}}]
Any distribution on $\{0,1\}^n$ can be approximated arbitrarily well with an RBM with $s+1$ hidden units, where $s$ is the number of input vectors whose probability does not vanish.
\end{thma}

This theorem proved the existence of a universal RBM approximator. The existence proof of a universal DBN approximator is due to  
\cite{Hinton:2008}. More precisely, \cite{Hinton:2008} explicitely constructed a DBN with $\sim 3\cdot 2^n$ hidden layers of width $n+1$ that approximates any distribution 
on $\{0,1\}^n$. 
Given that the existence problem of universal DBN approximators was positively resolved through this result, the efforts have been put into optimizing the size, i.e. reducing the number of parameters. This can be done by reducing the number of hidden layers involved in a DBN, or by making the hidden layers narrower. 
In terms of simple counting arguments, we give a lower bound on the minimal number of hidden layers required for the universality of a DBN with layers of size $n$. 
The number of free parameters in such a DBN is {\em square of the width of each layer} $\times$ {\em number of hidden layers} $+$ {\em number of units}, which for $k$ hidden layers is $k (n^2 + n)+ n$. On the other hand, the number of parameters needed to describe all distributions on $2^n$ elements, e.g. over binary vectors of length $n$, is $2^n-1$. Therefore, a lower bound on the number of hidden layers of a 
universal DBN approximator is given by $\tfrac{2^n-1-n}{n(n+1)}$ (which yields $2^n-1$ free parameters). Otherwise the number of parameters would not be sufficient.
Asymptotically, this bound is of order $\frac{2^n}{n^2}$.
Certainly, since the architecture of DBN's makes important restrictions on the way the parameters are used, such a lower bound is not necessarily %expected to be 
achievable. In particular the approximation of a distribution through a DBN or RBM is not unambiguous, i.e. for several choices of the parameters the same distribution is produced as marginal distribution. 
However, 
 in~\cite{Roux:10} it has been shown that a number of hidden layers of order $\frac{2^n}{n}$ is sufficient:  

\begin{thmb}%[Theorem~4 in \cite{Roux:10}]
If $n=2^t$, a DBN composed of $\frac{2^n}{n}+1$ layers of size $n$ is a universal approximator of distributions on $\{0,1\}^n$.
\end{thmb}
 
In the paper \cite{Roux:10} the optimality of the bound given in this theorem remains an open problem. However, their proof method suggests the sufficiency of less 
hidden layers, which was conjectured in their paper. The proof of Theorem 4 crucially depends on the authors' previous Theorem~2 in~\cite{Roux:08}. Our main contribution is to sharpen Theorem~2 (see Theorem~\ref{universalRBM} in Section~\ref{s:results}) which allows us to even better exploit their method 
and thereby confirm their conjecture (see Theorem~\ref{universalDBN} in Section~\ref{s:results}). %We consider our refinement as particularly interesting because there are reasons to believe that this already provides the optimal bound for the minimal number of hidden layers in a universal DBN approximator. 

\section{Results}\label{s:results}

%------------------------------------------%
\subsection{Restricted Boltzmann Machines}
%------------------------------------------%
The following Theorem~\ref{universalRBM} sharpens Theorem~2 in~\cite{Roux:10}. We will use it (its Corollary~\ref{corollaryRBM}) in the proof of our main result, Theorem~\ref{universalDBN}. 

\begin{theorem}[Reduced RBM's which are universal approximators]\label{universalRBM}
Any distribution $p$ on binary vectors of length $n$ can be approximated arbitrarily well by an RBM with $k-1$ hidden units, where $k$ is the minimal number of pairs of binary vectors, such that the two vectors in each pair differ in only one entry, and such that the support set of $p$ is contained in the union of these pairs.
\end{theorem}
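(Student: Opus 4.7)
The plan is to construct, for each $\varepsilon>0$, an RBM with $k-1$ hidden units whose visible marginal lies within $\varepsilon$ (in Kullback--Leibler divergence) of $p$. I would exploit the factorised form of the unnormalised RBM distribution
\[
\tilde p(v)\;=\;e^{B\cdot v}\prod_{j=1}^{k-1}\bigl(1+e^{W_j\cdot v + C_j}\bigr),
\]
matching one hidden-unit factor to each of $k-1$ pairs of the covering and delegating the remaining pair $P_k$ to the visible bias, which is where the improvement over the original $s+1$ bound ultimately comes from.

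The first ingredient is a \emph{pair-localising factor} lemma: for any pair $\{u,u'\}$ with $u'=u\oplus e_i$ and any prescribed values $\lambda_1,\lambda_2>0$, one can choose a single weight vector $W$ and offset $C$ so that $(1+e^{W\cdot v+C})$ is arbitrarily close to $\lambda_1$ at $v=u$, to $\lambda_2$ at $v=u'$, and to $1$ at every other $v\in\{0,1\}^n$. I would prove this by putting entries of very large magnitude $M$ with sign $2u_l-1$ on every coordinate $l\neq i$, which forces $W\cdot v+C$ strongly negative (hence the factor essentially equal to $1$) as soon as $v$ disagrees with $u$ outside coordinate $i$; the two remaining free parameters $C$ and $W_i$ can then be tuned to produce the two prescribed finite values at $u$ and $u'$.

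The second ingredient is a combinatorial reduction: a minimum pair-covering may be taken to consist of vertex-disjoint edges of the hypercube, since any overlap $P_i\cap P_j=\{v\}$ with $P_i=\{v,a\}$ and $P_j=\{v,b\}$ can be broken by replacing $P_j$ with $\{b,b'\}$, where $b'$ is any hypercube-neighbour of $b$ distinct from $v$ and $a$, without increasing $k$. Having reduced to disjoint pairs, I would assemble the RBM as follows: use the visible bias to drive $b_l$, with sign $2(u_k)_l-1$, to $\pm\infty$ for every $l\neq j_k$ so that $e^{B\cdot v}$ concentrates on the subcube $P_k$ (the internal ratio there being fixed by $b_{j_k}$), and attach to each $P_i$ with $i<k$ one hidden unit with a pair-localising factor, choosing $\lambda_{i,1},\lambda_{i,2}$ so as to cancel the bias-suppression of $u_i,u'_i$ and reproduce the target values $p(u_i),p(u'_i)$. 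Vectors outside $\bigcup_{i=1}^k P_i$ receive factor $\approx 1$ from every hidden unit while their bias weight vanishes, so their probability vanishes as well.

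The main obstacle will be the coordinated limit: the bias scale, the magnitudes $\lambda_{i,1},\lambda_{i,2}$ that must grow in lockstep with the bias in order to cancel its suppression on the other pairs, and the localisation strength $M$ inside each hidden-unit factor all diverge simultaneously, and one must verify that the ``approximately'' at each step really can be made arbitrarily small in the final distribution. The disjointness of the chosen pairs makes this tractable: once $B$ is fixed, the equation for each $(p(u_i),p(u'_i))$ with $i<k$ decouples and involves only its own two free parameters $\lambda_{i,1},\lambda_{i,2}$, so the resulting system is manifestly solvable, and continuity of the visible marginal in the RBM parameters delivers the $\varepsilon$-approximation in the joint limit.
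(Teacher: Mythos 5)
Your analytic core is sound and in fact coincides with the paper's: the ``pair-localising factor'' you build (large weights of sign $2u_l-1$ on all coordinates $l\neq i$, with the offset and the weight on coordinate $i$ left free to tune two independent values on the pair) is exactly the computation the paper performs with $\bar w = a(\tilde v_{\ha}-\tfrac12\mathds{1}_{\ha})+(\lambda_2-\lambda_1)e_j$, $\bar c=-\hat w\cdot\tilde v+\lambda_1$, and using the visible bias to account for one pair ``for free'' is exactly the paper's base case RBM$^0$. The organisational difference is that the paper runs a sequential induction -- each new hidden unit multiplies the \emph{current} visible marginal on an arbitrary pair by two independent factors $(1+e^{\lambda_1})$, $(1+e^{\lambda_2})$ while uniformly rescaling everything else -- so overlapping pairs cause no trouble, whereas you assemble everything in one shot from the product form and therefore need the pairs to be vertex-disjoint so that your equations decouple.

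That is where the genuine gap lies: the theorem defines $k$ as the minimal number of \emph{arbitrary} (possibly intersecting) pairs covering the support, so your construction only proves the claimed bound if every minimal covering can be replaced by a covering of the same cardinality consisting of pairwise disjoint edges. Your one-line argument for this -- replace $P_j=\{v,b\}$ by $\{b,b'\}$ for ``any'' neighbour $b'$ of $b$ other than $v$ -- does not establish it: the chosen $b'$ may already be an endpoint of other pairs, so a single swap can create as many overlaps as it removes, and you give no progress or termination argument for iterating the swaps (minimal coverings can contain whole ``stars'' of pairs through one vertex, and disentangling them requires a Hall-type system-of-distinct-representatives argument, not a local replacement with an arbitrary neighbour). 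You should either prove this combinatorial reduction properly or, more economically, drop the disjointness requirement by following the sequential scheme: since the factors contributed by the hidden units compose multiplicatively and each RBM marginal is strictly positive at finite parameters, an induction over the pairs handles shared vertices automatically and yields the theorem for arbitrary coverings. Two minor points: the prescribed factor values are necessarily in $[1,\infty)$, not merely positive (harmless here, since the values you need diverge with the bias scale), and your coordinated-limit sketch (fix the bias scale, solve the decoupled equations for the $\lambda$'s, then send the localisation weights and finally the bias scale to infinity) is the right order and does go through.
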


%------------------------------------------%
The set $\{0,1\}^n$ corresponds to the vertex set of the $n$-dimensional cube. The edges of the $n$-dimensional cube correspond to pairs of binary vectors of length $n$ which differ in exactly one entry. For the graph of the $n$-dimensional cube there exist perfect matchings, i.e., collections of disjoint edges which cover all vertices. Therefore we have the following: 
\begin{corollary}%[of Th.~\ref{universalRBM}]
\label{corollaryRBM}
Any distribution on $\{0,1\}^n$ can be approximated arbitrarily well by an RBM with $\frac{2^n}{2}-1$ hidden units. 
\end{corollary}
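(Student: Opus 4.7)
The plan is to combine Theorem~\ref{universalRBM} with a simple explicit perfect matching of the $n$-dimensional hypercube. By Theorem~\ref{universalRBM}, it suffices to exhibit, for an arbitrary distribution $p$ on $\{0,1\}^n$, a collection of at most $2^{n-1}$ pairs of binary vectors that differ in exactly one entry and whose union covers the support of $p$; then the RBM size bound $k-1 \le 2^{n-1}-1 = \frac{2^n}{2}-1$ follows immediately.

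The strongest possible such covering requirement is to cover all of $\{0,1\}^n$, since this dominates any particular support set. First, I would note that the edges of the $n$-cube correspond precisely to pairs of vectors differing in exactly one coordinate, so what I need is a perfect matching of the $n$-cube graph. Second, I would give an explicit matching: for each $v \in \{0,1\}^n$, pair $v$ with $v \oplus e_1$, where $e_1$ flips the first coordinate. These pairs are disjoint (each vertex belongs to exactly one pair), each pair consists of two vectors differing only in the first entry, and together they cover all $2^n$ vertices. Hence there are exactly $2^{n-1}$ pairs.

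Third, for any distribution $p$ the support is a subset of $\{0,1\}^n$, so this matching furnishes a family of pairs satisfying the hypothesis of Theorem~\ref{universalRBM} with $k \le 2^{n-1}$. Applying the theorem yields an RBM with $2^{n-1}-1 = \frac{2^n}{2}-1$ hidden units that approximates $p$ arbitrarily well.

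There is no serious obstacle here: the content of the corollary is entirely carried by Theorem~\ref{universalRBM}, and the only ingredient that needs to be supplied is the existence of a perfect matching in the $n$-cube, which is immediate from the first-coordinate-flip construction. The only point worth emphasizing in the write-up is that the bound $k \le 2^{n-1}$ is uniform in $p$, so the same RBM size works for every distribution on $\{0,1\}^n$.
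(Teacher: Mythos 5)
Your proposal is correct and follows essentially the same route as the paper: the corollary is deduced from Theorem~\ref{universalRBM} by covering $\{0,1\}^n$ with a perfect matching of the $n$-cube, giving $2^{n-1}$ pairs and hence $\frac{2^n}{2}-1$ hidden units. The only difference is cosmetic --- you make the matching explicit via the first-coordinate flip, whereas the paper simply invokes the existence of perfect matchings of the hypercube.
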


The proof of Theorem~\ref{universalRBM} given below is very much in the spirit of the proof of Theorem~2 in~\cite{Roux:08}. 
The idea there consists on showing that given an RBM with some marginal visible distribution, the inclusion of an additional hidden unit allows to increment the probability mass of one visible state vector, while uniformly reducing the probability mass of all other visible vectors. 

We show that the inclusion of an additional hidden unit in fact allows to increase the probabiliy mass of a pair of visible vectors, in independent ratio, given that this pair differs in one entry. At the same time, the probability of all other visible states is reduced uniformly. We also use the offset weights in the visible units to further improve the result.

\begin{proof}[Proof of Theorem~\ref{universalRBM}] We stay close to the notation used in \cite{Roux:08}. 
1. Let $p$ be the distribution on the states of visible and hidden units of an RBM. Its marginal probability distribution on $v$ can be written as
\begin{equation*}
p(v)=\frac{\sum_{h} z(v,h)}{\sum_{v^0,h^0}z(v^0,h^0)}.
\end{equation*}
Denote by $p_{w,c}$ the distribution arising through the adding of a hidden unit to the RBM connected with weigths $w=(w_1,\ldots,w_n)$ to the visible units, and with offset weight $c$. Its marginal distribution can be written as
\begin{equation*}
p_{w,c}(v)=\frac{(1+\exp (w\cdot v +c))\sum_{h} z(v,h)}{\sum_{v^0,h^0}(1+\exp (w\cdot v^0 +c))z(v^0,h^0)}.
\end{equation*}

2. Given any vector $v\in\{0,1\}^n$ we write $v_\ha$ for the vector defined through $(v_\ha)_i=v_i, \forall i\neq j$, and $(v_\ha)_j=0$. We also write ${\mathds{1}}:=(1,\ldots,1)$, and $e_j:={\mathds{1}}-{\mathds{1}}_\ha$.

3. For any $j\in\{1,\ldots,n\}$ let $\tilde v$ be an arbitrary vector with $\tilde v_j=1$, and $s:=|\{i\neq j:\tilde v_i=1\}|$. Define 
\begin{eqnarray*}
\hat w &:=& a(\tilde v_\ha -\frac{1}{2}{\mathds{1}}_\ha),\\
\bar w &:=& a(\tilde v_\ha -\frac{1}{2}{\mathds{1}}_\ha) +(\lambda_2 -\lambda_1) e_j,\\
\bar c &:=& -\hat w\cdot \tilde v + \lambda_1 = -\hat w \cdot \tilde v_\ha + \lambda_1.
\end{eqnarray*}
For the weights $\bar w$ and $\bar c$ we have:
\begin{eqnarray*}
\bar w \cdot v &=& \frac{1}{2} a(s - |\{i:(\tilde v_\ha)_i\neq (v_\ha)_i\}|) +(\lambda_2-\lambda_1) v_j,\\
\bar c &=&-\frac{1}{2} a s +\lambda_1,
\end{eqnarray*}
and in the limit $a\to\infty$ we get:
\begin{eqnarray*}
\lim_{a\to\infty} 1 +\exp(\bar w\cdot v + \bar c) &=& 1,\quad\forall v\neq \tilde v, \tilde v_\ha, \\
\lim_{a\to\infty} 1 +\exp(\bar w\cdot \tilde v_\ha + \bar c) &=& 1 + e^{\lambda_1},\\
\lim_{a\to\infty} 1 +\exp(\bar w\cdot \tilde v + \bar c) &=& 1 + e^{\lambda_2}.
\end{eqnarray*}
Just as in the Proof of Theorem 2 in \cite{Roux:08} this yields for the marginal distribution on the visible states of the enlarged RBM the following:
\begin{eqnarray*}
\lim_{a\to\infty} p_{\bar w,\bar c} (v) &=& \frac{p(v)}{1+e^{\lambda_1} p(\tilde v_\ha) + e^{\lambda_2} p(\tilde v)},\quad\forall v\neq \tilde v,\tilde v_\ha,\\
\lim_{a\to\infty} p_{\bar w,\bar c} (\tilde v_\ha) &=& \frac{(1+e^{\lambda_1})p(\tilde v_\ha)}{1+e^{\lambda_1} p(\tilde v_\ha) + e^{\lambda_2} p(\tilde v)},\\
\lim_{a\to\infty} p_{\bar w,\bar c} (\tilde v) &=& \frac{(1+e^{\lambda_2})p(\tilde v)}{1+e^{\lambda_1} p(\tilde v_\ha) + e^{\lambda_2} p(\tilde v)}.
\end{eqnarray*}
This means that the probability of $\tilde v$ and of $\tilde v_{\hat j}$ can be increased independently by a multiplicative factor, while all other probabilities are reduced uniformly. 

4. Now we explain how to start an induction from which the claim follows. 
Consider an RBM with no hidden units, RBM$^0$. Through a choice of the offset weigths in every visible unit, RBM$^0$ produces as visible distribution any arbitrary factorizable distribution $p^0(v)\propto \exp(B\cdot v) \propto \exp(B\cdot v + K)$, where $B$ is the vector of offset weights and $K$ is a constant that we introduce for illustrative reasons, and is not a parameter of the RBM$^0$ since it cancels out with the normalization of $p^0$.  In particular, RBM$^0$ can approximate arbitrarily well any distribution with support given by a pair of vectors that differ in only one entry. To see this consider any pair of vectors $\tilde v$ and $\tilde v_{\hat j}$  that differ in the entry $j$. Then, the choice $B=a(\tilde v_\ha -\frac{1}{2}{\mathds{1}}_\ha) +(\lambda_2 -\lambda_1) e_j$ and $K=-a (\tilde v_\ha -\frac{1}{2}{\mathds{1}}_\ha) \tilde v + \lambda_1$ yields in the limit $\lim_{a\to\infty}$ (similarly to the equations in item 3. above) that $\lim_{a\to\infty}p^0(v)=0$ whenever $v\neq\tilde v$ and $v\neq \tilde v_\ha$, while $\lim_{a\to\infty} p^0(\tilde v) /p^0(\tilde v_\ha)=\exp(\lambda_2-\lambda_1)$ can be chosen arbitrarily by modifying $\lambda_1$ and $\lambda_2$. Hence, $p^0$ can be made arbitrarily similar to any distribution with support $\{\tilde v,\tilde v_\ha\}$. Notice that $p^0$ remains positive for all $v$ and $a<\infty$. 

By the arguments described above, every additional hidden unit allows to increase the probability of any pair of vectors which differ in one entry. Obviously, it is possible to do the same for a single vector instead of a pair. Thence, with every additional hidden unit the support set of the probabilities which can be approximated arbitrarily well is enlarged by an arbitrary pair of vectors which differ in one entry. This is, RBM$^{(i-1)}$ is an approximator of distributions with support contained in any union of $i$ pairs of vectors which differ in exactly one entry. 
\end{proof}

We close this passage with some remarks: 

The possiblity of independent change of the probability mass of two visible vectors is due to the usability of the following two parameters: a) The offset input weigth in the added hidden unit, and b) the weight of the connection between the added hidden unit and the visible unit where the pair of visible vectors differ. See item 3. in the Proof. 

The attempt to use a similar idea to increment the probability mass of three different vectors in independent ratios inducts a coupled change in the probability of a fourth vector. Three vectors differ in at least 2 entries, as do four vectors. Since only 3 parameters are available (the offset of the new hidden unit and two connection weigths), the dependency arises. 

It is worth noting, that using exclusively a similar idea will not allow an exension of Theorem~2 in~\cite{Roux:10} to permit the flip of a certain bit with a certain probability (only) given one of three input vectors.

%------------------------------------------%
\subsection{Deep Belief Networks}\label{s:DBN}
%------------------------------------------%

In this section we implement our Theorem~\ref{universalRBM} to modify the construction given in the proof of Theorem~4 in \cite{Roux:10} and prove our main result, Theorem~\ref{universalDBN}:

\begin{theorem}[Reduced DBN's which are universal approximators]\label{universalDBN}
Let $n=\frac{2^b}{2}+b$, $b\in{\bf N}$, $b\geq 1$. A DBN containing $\frac{2^n}{2(n-b)}$ hidden layers of width $n$ is a universal approximator of distributions on $\{0,1\}^n$.
\end{theorem}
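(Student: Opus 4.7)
The plan is to follow the strategy of the proof of Theorem~4 in \cite{Roux:10}, substituting our Corollary~\ref{corollaryRBM} and the pair-transition construction from the proof of Theorem~\ref{universalRBM} wherever \cite{Roux:10} invokes Theorem~2 of \cite{Roux:08}. Since Theorem~\ref{universalRBM} doubles (per added hidden unit) the number of visible vectors whose probabilities can be independently adjusted, one expects the required number of DBN layers to be roughly halved, which is exactly what the statement claims.

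First I would fix a combinatorial decomposition of $\{0,1\}^n$. I order the $2^n$ vectors along a Gray code $v_1,v_2,\ldots,v_{2^n}$, so consecutive vectors are Hamming-adjacent, and partition the list into $m:=\frac{2^n}{2(n-b)}=2^n/2^b$ consecutive blocks $B_1,\ldots,B_m$ of size $2(n-b)=2^b$. The pairs $\{v_{2j-1},v_{2j}\}$ of consecutive Gray-code vectors are Hamming edges, so each $B_k$ is a disjoint union of $n-b=2^{b-1}$ Hamming pairs. The specific form $n=2^{b-1}+b$ is precisely what is needed to make $m$ an integer and to match the block size to the ``budget'' of a DBN layer of width $n$.

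Next I would prove a single-layer transition lemma, which is the DBN analog of item~3 in the proof of Theorem~\ref{universalRBM}: assuming the distribution on the current hidden layer is supported on some set $T\subseteq\{0,1\}^n$, one further DBN layer of width $n$ can be chosen so that the distribution on the layer below is supported on $T\cup B$, for any prescribed block $B$ that is a union of $n-b$ Hamming pairs and for any prescribed marginal on $B$, while the mass on $T$ is (essentially) passed through unchanged. The construction mirrors the pair-swap of Theorem~\ref{universalRBM}: each of the $n-b$ pairs in $B$ is handled by one unit of the new layer equipped with the biaxial parameters $(\lambda_1,\lambda_2)$, which permits the two probabilities in that pair to be tuned independently, while the limit $a\to\infty$ both suppresses interference between different pairs and renders all other vectors negligible. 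Granted this lemma, I would use the top RBM (Theorem~\ref{universalRBM} applied with $k=n-b$, fitting comfortably into the width-$n$ top hidden layer) to approximate an arbitrary distribution supported on $B_m$, and then apply the lemma $m-1$ times to successively append $B_{m-1},B_{m-2},\ldots,B_1$ with the correct marginals, yielding $p$ on the visible layer after $m$ hidden layers in total.

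The main obstacle is the single-layer transition lemma. A DBN layer conditional $p(v_{k-1}\mid v_k)$ factorizes over the units of layer $k-1$ for each fixed $v_k$, which is more restrictive than the marginal of an RBM; one has to verify that the pair-swap arithmetic from the proof of Theorem~\ref{universalRBM} carries over to this conditional setting, simultaneously for all $v_k\in T$ and for all $n-b$ pairs in $B$. Translating the computation with $\hat w$, $\bar w$, $\bar c$ from item~3 of that proof into the present layer, replicating it across $n-b$ pairs, and checking that all cross-interactions vanish as $a\to\infty$, is the delicate step. Once it is in hand, the block-by-block induction producing the full distribution $p$ is routine, and the count of $m=\frac{2^n}{2(n-b)}$ hidden layers falls out of the combinatorial decomposition.
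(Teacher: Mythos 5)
Your overall plan (reuse the layer-by-layer construction of \cite{Roux:10}, with Theorem~\ref{universalRBM} replacing Theorem~2 of \cite{Roux:08}) points in the right direction, but the central step you defer --- the ``single-layer transition lemma'' --- is a genuine gap, and as stated it cannot be proved. First, it conflicts with conservation of mass: if the mass on $T$ is passed through unchanged, no mass is left to place on the new block $B$, so some vector(s) of $T$ must carry surplus mass to be transferred, and you never specify where this surplus sits or how earlier layers arranged it. Second, and more seriously, the conditional implemented by a directed DBN layer is, for each fixed state $u$ of the layer above, a product distribution over the units below. Its support is a subcube, and a block of $2(n-b)$ consecutive Gray-code vectors is in general not a subcube; even when it is, an arbitrary distribution on it is not of product form. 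Hence from a single surplus-carrying vector you cannot deposit an arbitrary prescribed marginal on a whole block in one layer. Nor does the limit $a\to\infty$ ``suppress interference between different pairs'': if two units deviate from copying on the same conditioning state $u$ with flip probabilities $q_1$ and $q_2$, the doubly-flipped vector receives mass $q_1q_2$ regardless of $a$ --- exactly the coupling obstruction discussed in the remarks following the proof of Theorem~\ref{universalRBM}. The pair-boost trick there concerns an added \emph{hidden unit of an RBM}, i.e. a multiplicative factor $1+\exp(w\cdot v+c)$ on the joint distribution, and has no analog for a directed layer; the only single-layer tool available is Theorem~2 of \cite{Roux:10}, by which one unit can implement a single-bit exception on a pair of conditioning states differing in one other bit.

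The paper's proof respects this restriction by never asking a layer to fill a block. Instead of cutting one Gray code into $\frac{2^n}{2(n-b)}$ consecutive blocks filled one per layer, it runs $2(n-b)$ \emph{parallel} sequences (Lemma~\ref{lemmaGray}), each of length $\frac{2^n}{2(n-b)}$: the surplus mass of each sequence always sits at its current front and advances by exactly one Hamming step per layer, dropping off the required probability (Lemma~\ref{lemmaDBN}); condition~3 of Lemma~\ref{lemmaGray} guarantees that when two sequences flip the same bit in the same layer their fronts are Hamming-adjacent, which is precisely the situation Theorem~2 of \cite{Roux:10} can realize with a single unit. Theorem~\ref{universalRBM} (via Corollary~\ref{corollaryRBM}) enters only once, at the top RBM, to give the $2(n-b)$ starting vectors $S_{i,1}$ (a $b$-dimensional subcube when the Gray code starts at $(0,\ldots,0)$, hence a union of $n-b$ Hamming pairs) their sequences' total masses. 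So the halving of the number of layers comes from doubling the number of simultaneously running single-bit sharing sequences, not from filling larger blocks per layer; to repair your argument you would have to replace your block decomposition and transition lemma by such a parallel-sequence scheme, i.e.\ essentially re-derive Lemmas~\ref{lemmaGray} and~\ref{lemmaDBN}.
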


Before proving Theorem~\ref{universalDBN} we first develop some components of the proof. 

An important idea of \cite{Hinton:2008} is that of {\em sharing}, by means of which in a part of a DBN the probability of a vector is increased while the probability of another vector is decreased and the probability of all other vectors remains nearly constant. This idea is refined in Theorem~2 of \cite{Roux:10}: 

\begin{thmc}%[Theorem~2 in~\cite{Roux:10}]
[slightly different formulation]
Consider two layers of units indexed by $i\in\{1,\ldots,n\}$ and $k\in\{1,\ldots,n\}$, and denote by $v$ and $h$ state vectors in each layer. Denote by $\{w_{ik}\}_{i,k=1,\ldots,n}$ the connection weights and by $\{c_k\}_{k=1,\ldots,n}$ the offset weights in the second layer. 
Given any $l$ and $j$, $l\neq j$, let $a$ be an arbitrary vector in $\{0,1\}^n$ and $b$ another vector with $b_i=a_i\, \forall i\neq j$, and $a_j\neq b_j$. Then, it is possible to choose weights $w_{k,l}$, $k\in\{1,\ldots,n\}$, and $c_l$ such that the following equations are satisfied with arbitrary accuracy: $P(v_l=h_l|h)= 1 \forall h\not\in \{a, b\}$, while $P(v_l=1|h=a)=p_a$ and $P(v_l=1|h=b)=p_b$ with arbitrary $p_a,p_b$.
\end{thmc}

By this Theorem, a sharing step can be accomplished in only one layer, whereas probability mass is transferred from a chosen vector to another vector differing in one entry. Futhermore, it demands adaptation only of the connection weights and offset weight of one single unit. Thereby, the overlay of a number of sharing steps in each layer is possible. 

The main idea in \cite{Roux:10} was to exploit these circumstances using a clever sequence of transactions of probabilities. 
The requirements for the realizability of sharing sequences using Theorem~2 in~\cite{Roux:10} can be summarized in properties of sequences of vectors. These properties are described in Theorem~3 of \cite{Roux:10}, or in the items 2-3 of our appropriately modified version of that Theorem, Lemma~\ref{lemmaGray} below. 

How the Theorem~2 in~\cite{Roux:10} and Lemma~\ref{lemmaGray} brace the construction of a universal DBN approximator will become clearer in the afterwards following Lemma~\ref{lemmaDBN}. 

\begin{lemma}\label{lemmaGray}
Let $n=\frac{2^b}{2} + b, b\in{\bf N}, b\geq 1$. There exist $a:=2^b=2(n-b)$ sequences of binary vectors $S_{i}$, $0\leq i< a-1$ composed of vectors $S_{i,k}, 1\leq k\leq\frac{2^n}{a}$ satisfying the following:
\begin{enumerate}
\item $\{S_0,\ldots,S_{a-1}\}$ is a partition of $\{0,1\}^n$.
\item $\forall i\in\{0,\ldots,a-1\}$, $\forall k\in\{1,\ldots,\frac{2^n}{a}-1\}$ we have $H(S_{i,k},S_{i,k+1})=1$, where $H(\cdot,\cdot)$ denotes the Hamming distance. 
\item $\forall i,j\in\{0,\ldots,a-1\}$ such that $i\neq j$ and $\forall k\in\{1,\ldots,\frac{2^n}{a}-1\}$ the bit switched between $S_{i,k}$ and $S_{j,k+1}$ and the bit switched between $S_{j,k}$ and $S_{j,k+1}$ are different, unless $H(S_{i,k},S_{j,k})=1$.
\end{enumerate}
\end{lemma}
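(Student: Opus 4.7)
Proof plan.

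The plan is to build each sequence by concatenating a fixed $b$-bit prefix with a Gray-code traversal of the $(n-b)$-bit suffix, while coordinating the orderings across prefixes so that condition 3 holds automatically. First, I would identify $\{0,1\}^n$ with $\{0,1\}^b \times \{0,1\}^{n-b}$, declare that $S_i$ enumerates the coset with prefix $i \in \{0,1\}^b$, and thereby reduce conditions 1 and 2 to simply choosing, for each prefix, a Gray-code ordering of the $2^{n-b}$ suffixes. The three arithmetic identities $a = 2^b$, $n-b = 2^{b-1}$, and $a/2 = n-b$ then make the counts on both sides of the construction match up, which is a recurring miracle that the hypothesis $n = 2^{b-1}+b$ is designed to produce.

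Next, I would fix a perfect matching of the prefix hypercube $\{0,1\}^b$ into $a/2 = n-b$ pairs at Hamming distance $1$, say $i \sim i \oplus e_1^{(b)}$, and label the pairs by $p \in \{0,1,\ldots,n-b-1\}$. Choosing any base Gray code $G = (g_1,\ldots,g_{2^{n-b}})$ on $\{0,1\}^{n-b}$ with step-$k$ flip bit $\beta_k$, I would identify the suffix-bit positions with ${\bf Z}_{n-b}$ and apply the cyclic shift $\pi_p : j \mapsto (j+p) \bmod (n-b)$ to all coordinates of $G$ to obtain $G^{(p)} := \pi_p(G)$. Since a permutation of bit positions is a hypercube automorphism, $G^{(p)}$ is again a Gray code, with step-$k$ flip bit $\pi_p(\beta_k)$. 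For each prefix $i$ lying in pair $p$, I would then set $S_{i,k} := (i, G^{(p)}_k)$.

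Verification of condition 3 then splits on whether $i$ and $j$ lie in the same pair. If they do, they share a common suffix $G^{(p)}_k$ at every step and differ only in the matching prefix bit, so $H(S_{i,k}, S_{j,k}) = 1$ and the exemption applies; if they lie in different pairs $p \neq q$, the flipped bits $\pi_p(\beta_k)$ and $\pi_q(\beta_k)$ differ because cyclic shifts by distinct residues modulo $n-b$ are pointwise distinct, so no exemption is needed. The main obstacle in this plan is the second step: producing $n-b$ Gray codes on $\{0,1\}^{n-b}$ whose flipped-bit patterns are pairwise distinct at every one of the $2^{n-b}-1$ transitions. It is essential that ${\bf Z}_{n-b}$ acts sharply transitively on the suffix coordinates, which is possible precisely because the number of pairs ($n-b$) equals the number of suffix bits ($n-b$); if this equality failed, a transversal would be impossible for counting reasons, and a different algebraic structure would have to be substituted for the cyclic group.
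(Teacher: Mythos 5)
Your construction is correct and is essentially the paper's own proof: the paper likewise fixes the $b$-bit prefix to be the binary index of the sequence and runs a base Gray code on the $n-b$ suffix bits cyclically shifted by $i \bmod (n-b)$, so that exactly the two prefixes differing in one (the most significant) bit share a shift, which is precisely your perfect-matching formulation. The verification of condition 3 (same pair gives $H(S_{i,k},S_{j,k})=1$; distinct shifts give distinct flipped positions) coincides with the paper's argument.
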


\begin{proof}[Proof of Lemma~\ref{lemmaGray}]
Let $G_{n-b}^0$ be any Gray code for $(n-b)$ bits. 
Such a Gray code is a matrix of size $2^{n-b}\times (n-b)$, where every two consecutive rows have Hamming distance one to each other, and the collection of all rows is $\{0,1\}^{n-b}$. Obviously any permutation of columns of this Gray code has the same properties. Let $G_{n-b}^i$ be the cyclic permutation of columns $i$ positions to the left. 

Now define $S_i:=\begin{pmatrix} \text{bin}(i)\\ \vdots & G_{n-b}^{i \text{mod}(n-b)}\\ \text{bin}(i) \end{pmatrix}$, i.e. the first $b$ bits of the vector $S_{i,k}$ contain the $b$-bit binary representation of $i$. The rest of the bits contain the $k$-th row in the Gray code $G_{n-b}^0$ for arrays of length $n-b$ cyclically shifted $i$ positions to the left. 
The cyclic permutation makes that every two sequences of vectors $S_i$ and $S_j$, $i\neq j$ change the same bit in the same row (in this case they also do in every row) only if the value of the first part $\text{bin($i$)}$ and $\text{bin($j$)}$ of the two sequences differs in only one entry (in the first entry). 
\end{proof}

Every two consecutive vectors in a sequence given in Lemma~\ref{lemmaGray} differ in only one entry and this entry can be located in almost any position $\{1,\ldots,n\}$. In contrast, for the sequences given in Theorem~3 of~\cite{Roux:10} that entry can be located only in a subset of $\{1,\ldots,n\}$ of cardinality $n/2$. 

In the Lemma above, for any row, every one of $n-b$ entries is flipped by exactly two sequences. 
Regard that the attempt to produce $2n$ instead of $2(n-b)$ sequences with the properties 1-2 of the Lemma (and flips in all entries) would correspond to the following: 
Set $\begin{pmatrix}S_1\\ \vdots \\ S_{2n}\end{pmatrix} = G_n$, i.e., the sequences to be overlayed are portions of the same Gray code. In this case it is difficult to achive that condition 3. is satistfied, i.e., that if $S_i$ and $S_j$ flip the same bit in the same row, then $H(S_{i,k},S_{j,k})=1$. The condition 3. however is essential for the use of Theorem~2 of~\cite{Roux:10}. 
Most common Gray codes flip some entries more often than other entries and can be discarded. Oher sequences referred to as {\em totally balanced Gray codes} flip all entries equally often and exist whenever $n$ is a power of $2$, but still a strong cyclicity condition would be required. % in order to have property 3. of Lemma~\ref{lemmaGray}. 
On account of this we say that the sequences given in Lemma~\ref{lemmaGray} allow optimal use of Theorem~2 of~\cite{Roux:10}. 

The following Lemma~\ref{lemmaDBN} is a transcription of Lemma~1 in~\cite{Roux:10} with replacements of indices according to our construction. 
The proof is an obvious transcription which we omit here. Denote by $h^i$ a state vector of the units in the hidden layer $i$, and denote by $h^0$ a visible state. 
\begin{lemma}\label{lemmaDBN}
Let $p^*$ be an arbitrary distribution on $\{0,1\}^n$. Consider a DBN with $\frac{2^n}{a}+1$ layers and the following properties: 
\begin{enumerate}
\item $\forall i\in\{0,\ldots,a-1\}$ the top RBM between $h^{\frac{2^n}{a}}$ and $h^{\frac{2^n}{a}-1}$ assigns probability $\sum_k p^*(S_{i,k})$ to $S_{i,1}$,
\item $\forall i\in\{0,\ldots,a-1\}$, $\forall k\in\{1,\ldots,\frac{2^n}{a}-1\}$
\begin{eqnarray*}
P(h^{\frac{2^n}{a}-(k+1)} = S_{i,k+1}|h^{\frac{2^n}{a}-k}=S_{i,k}) &=& \frac{\sum_{t=k+1}^{\frac{2^n}{a}} p^*(S_{i,t})}{\sum_{t=k}^{\frac{2^n}{a}} p^*(S_{i,t})}, \\
P(h^{\frac{2^n}{a}-(k+1)}=S_{i,k}|h^{\frac{2^n}{a}-k}=S_{i,k}) &=& \frac{p^*(S_{i,k})}{\sum_{t=k}^{\frac{2^n}{a}} p^*(S_{i,t})},
\end{eqnarray*}
\item $\forall k\in\{1,\ldots,\frac{2^n}{a}-1\}$ the DBN provides
\begin{equation*}
P(h^{\frac{2^n}{a}-(k+1)}=u| h^{\frac{2^n}{a}-k}=u)=1,\quad\forall u\not\in \cup_i \{S_{i,k}\}.
\end{equation*}
\end{enumerate}
Such a DBN has $p^*$ as its marginal visible distribution.
\end{lemma}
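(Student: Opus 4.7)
The plan is to treat the directed portion of the DBN as a Markov chain and propagate the joint distribution down from the top RBM layer by layer, showing by induction on $k \in \{1, \ldots, \tfrac{2^n}{a}\}$ that the marginal on layer $h^{\frac{2^n}{a}-k}$ has the specific form
\begin{equation*}
P(h^{\frac{2^n}{a}-k} = v) =
\begin{cases}
p^*(S_{i,t}), & \text{if } v = S_{i,t},\ 1 \leq t < k, \\
\sum_{s=k}^{\frac{2^n}{a}} p^*(S_{i,s}), & \text{if } v = S_{i,k}, \\
0, & \text{otherwise.}
\end{cases}
\end{equation*}
Specializing to $k = \tfrac{2^n}{a}$ yields $P(h^0 = S_{i,t}) = p^*(S_{i,t})$ for all $i$ and $t$, and since by Lemma~\ref{lemmaGray} the $S_i$ partition $\{0,1\}^n$, this is exactly the distribution $p^*$, which is the conclusion of the lemma.

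The base case $k = 1$ is exactly item~1 of the hypothesis: the top RBM places mass $\sum_s p^*(S_{i,s})$ on $S_{i,1}$ and no mass elsewhere. For the inductive step, I would marginalize via
\begin{equation*}
P(h^{\frac{2^n}{a}-(k+1)} = v) \;=\; \sum_u P\bigl(h^{\frac{2^n}{a}-(k+1)} = v \,\big|\, h^{\frac{2^n}{a}-k} = u\bigr)\, P\bigl(h^{\frac{2^n}{a}-k} = u\bigr),
\end{equation*}
and read off each contribution from items~2 and~3. Item~3 pins every $u \notin \cup_i \{S_{i,k}\}$ by a self-loop of probability $1$, so each already-frozen vector $S_{i,t}$, $t < k$, passes its mass $p^*(S_{i,t})$ unchanged. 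Item~2 splits the remaining mass $\sum_{s=k}^{\frac{2^n}{a}} p^*(S_{i,s})$ at $S_{i,k}$ into a self-loop contribution of $p^*(S_{i,k})$ and a transition contribution of $\sum_{s=k+1}^{\frac{2^n}{a}} p^*(S_{i,s})$ to $S_{i,k+1}$. Collecting these reproduces the invariant at step $k+1$.

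The main thing to verify is consistency between items~2 and~3 at each stage: item~3 freezes everything outside $\cup_i\{S_{i,k}\}$, while item~2 describes transitions from vectors in that set. The invariant guarantees that the active (non-frozen) vectors carrying positive probability at stage $k$ are exactly $\{S_{i,k}\}_i$, while the already-frozen vectors $\{S_{i,t} : t < k\}$ are handled by item~3; no other vector carries positive mass, so items~2 and~3 together describe the full dynamics of the layer transition. Beyond this compatibility check the argument is a mechanical telescoping of the conditional probabilities in the hypothesis — which matches the author's remark that the proof is an obvious transcription of Lemma~1 in~\cite{Roux:10}.
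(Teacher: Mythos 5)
Your proof is correct and is essentially the argument the paper has in mind: the paper omits the proof as an ``obvious transcription'' of Lemma~1 in \cite{Roux:10}, whose content is exactly your layer-by-layer telescoping induction, with the invariant that at layer $h^{\frac{2^n}{a}-k}$ the vectors $S_{i,t}$, $t<k$, are frozen with mass $p^*(S_{i,t})$ and $S_{i,k}$ carries the residual mass $\sum_{s\geq k}p^*(S_{i,s})$. Your compatibility check (that the partition property of the $S_i$ keeps items~2 and~3 from conflicting and prevents mass leakage) is the only point needing care, and you handle it correctly.
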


We conclude this section with the proof of Theorem~\ref{universalDBN} and some remarks:

\begin{proof} [Proof of Theorem~\ref{universalDBN}]
The proof is analogous to the Proof of Theorem 4 in \cite{Roux:10}. 
We just need to show the existence of a DBN with the properties of the DBN described in Lemma~\ref{lemmaDBN}. 
In view of Theorem~\ref{universalRBM} it is possible to achive that the top RBM assigns arbitrary probability to the collection of vectors $S_{i,1}, i\in\{0,\ldots,a-1\}$, whenever it can be arranged in pairs of neighbouring vectors (or from Corollary~\ref{corollaryRBM}, if all vectors are equal in a set of entries). This requirement is met for $S_{i,1}, i\in\{0,\ldots,a-1\}$ of Lemma~\ref{lemmaGray}, (e.g. choosing a Gray code whose first element is $(0,\ldots,0)$ or $(1,\ldots,1)$). The subsequent layers are just like in the Proof of Theorem 4  in \cite{Roux:10}. They are possible in consideration of the mantained validity of Theorem~2 in~\cite{Roux:10} using the sequences provided in Lemma~\ref{lemmaGray} of the present paper. 
The only difference is that by our definition of $S_{i}$, $i\in\{0,\ldots,a-1\}$, at each layer $n-b$ bit flips (with correct probabilities) occur, instead of $\tfrac{n}{2}$. 
\end{proof}

In the paper \cite{Roux:10} the authors overlayed $n$ sequences of sharing steps (Theorem~3 in that paper) for the construction of a universal DBN approximator. 
In principle an overlay of more such sequences is possible. %$2n$ sharing steps in each layer is possible, (even if the overlay of $2n$ optimal sequences of sharing steps is not, as discussed below Lemma~\ref{lemmaGray} in the present paper). 
This is what we exploit in our proof, (the sequences given in Lemma~\ref{lemmaGray}). 
Apparently, the overlay of more sequences was not realized in that paper because for the initialization of these sequences, (property 1. in Lemma~1 in that paper), the authors use Theorem~2 of \cite{Roux:08}, which only allows to assign arbitrary probability to $n$ vectors. Our result Theorem~\ref{universalRBM} overcomes this difficulty and allows to initialize up to $2(n+1)$ sequences, which we use to obtain property 1. in Lemma~\ref{lemmaDBN}.

\section{Conclusion}

We have shown that a Deep Belief Network (DBN) with $\frac{2^n}{2(n-b)}$, $b\sim\log n$, hidden layers of size $n$ is capable of approximating any distribution on $\{0,1\}^n$ arbitrarily well as its marginal visible distribution. (This confirms a conjecture presented in \cite{Roux:10}). The number of layers $\frac{2^n}{2(n-b)}$ is of order $\frac{2^n}{2n}$. 
This DBN has $\frac{2^n}{2(n-b)} n^2 + \frac{2^n}{2(n-b)} n + n$ parameters, which is of order $\frac{n 2^n}{2}$. 

Furthermore, we have shown that a Restricted Boltzmann Machine (RBM) with $\frac{2^n}{2}-1$ hidden units is capable of approximating any distribution on $\{0,1\}^n$ arbitrarily well as its marginal visible distribution. 
This RBM has $\frac{2^n}{2} n + \frac{2^n}{2} $ parameters, which is of order $\frac{n2^n}{2}$. 

Our results improve all to date known bounds on the minimal size of universal DBN and RBN approximators. 
We still do not know if our results represent the minimal sufficient size for universal DBN and RBN approximators. 
Our construction already exploits Theorem~2 in~\cite{Roux:10} exhaustively, and therefore a construction using only similar ideas will not allow improvements. 
However, we have performed numerical computations (we do not include details here) showing the existence of RBM's containing less than $\frac{2^n}{2}-1$ hidden units and which can approximate complex classes of distributions on $\{0,1\}^n$ arbitrarily well. This suggests that in the present construction the representational power of RBM's is not fully exploited. Whether further reductions of the size of a universal DBN approximator are possible is subject of our ongoing research, \cite{guido:2010a}.

\end{document}